\documentclass{article}

\usepackage{PRIMEarxiv}

\usepackage{url}

\usepackage{booktabs}       
\usepackage{amsfonts}       
\usepackage{nicefrac}       
\usepackage{microtype}      
\usepackage{xcolor}         

\usepackage{amsmath}
\usepackage{amssymb}
\usepackage{amsthm}
\usepackage{subcaption}
\usepackage{tabularx}
\usepackage{array}

\usepackage{graphicx}

\newtheorem{proposition}{Proposition}
\theoremstyle{remark}
\newtheorem{remark}{Remark}

\title{Rethinking Vacuity for OOD Detection in Evidential Deep Learning
}

\author{
  Claire McNamara\thanks{ORCID ID: \texttt{0000-0002-8263-8694, \texttt{mcnamacl@tcd.ie}}} \\
  Accenture Labs \\
  Ireland \\
}

\begin{document}
\maketitle
\begin{abstract}
Vacuity, or Uncertainty Mass (UM), is commonly used as a metric to evaluate Out-of-Distribution (OOD) detection in Evidential Deep Learning (EDL). It generally involves dividing the number of classes ($K$) by the total strength of belief ($S$) of the model's predictions, where $S$ is derived from summing the Dirichlet parameters. As such, UM is sensitive to the cardinality of $K$. As a result, when comparing In Distribution (ID) and OOD results, it is important that $K_{\mathrm{ID}}$ and $K_{\mathrm{OOD}}$ are equal; something that is not always ensured in practice. We provide an empirical demonstration of how results for AUROC and AUPR can substantially differ when class cardinality between ID and OOD differs by 1, with AUROC differing by as much as 0.346 and AUPR by 0.634 for standard EDL, and AUROC by 0.427 and AUPR by 0.745 for IB-EDL (both from Implementation B, Llama3-8B, ARC-E). Our findings isolate an evaluation artefact: when $K$ differs between ID and OOD, AUROC/AUPR can be artificially inflated without any change in model predictions. We further discuss the evaluation of EDL over causal language models using Multiple-Choice Question-Answer (MCQA) datasets and argue for clearer definitions of ID and OOD in this context. Our primary contribution is an empirical and theoretical demonstration that vacuity-based OOD detection in EDL-fine-tuned LLMs is highly sensitive to uncontrolled differences in evaluated class cardinality.
\end{abstract}

\section{Introduction}
Evidential Deep Learning (EDL) \cite{sensoy_evidential_2018} has emerged as a promising way to quantify uncertainty in causal large language models (LLMs) by fine-tuning them to operate as next token classifiers over Multiple-Choice Question-Answer (MCQA) datasets \cite{li_calibrating_2025, chun_evidential_2026}. The utilisation of datasets like MCQAs rather than free-text generation stems from how EDL operates. It requires a fixed set of classes ($K$) over which to form Dirichlet parameters. Summing the Dirichlet parameters gives you the model's total strength of belief ($S$) and from this, vacuity can then be understood as the uncertainty mass (UM), $\frac{K}{S}$. In EDL, UM is the quantity, vacuity its interpretation as ignorance-based uncertainty, and epistemic uncertainty the broader concept it approximates; for this paper we use \textit{UM} and \textit{vacuity} interchangeably.

While calculating vacuity-based uncertainty using $\frac{K}{S}$ can be useful \cite{sensoy_evidential_2018, aguilar_continual_2023, aguilar_cedl_2025}, as we will present in this paper, (i) vacuity-based Out-of-Distribution (OOD) detection can be highly sensitive to class number ($K$) specification in MCQA-based EDL settings, and (ii) comparisons across datasets therefore require $K$ to be held constant. This issue is particularly salient in MCQA-based LLM evaluation because the class set is not a fixed semantic label space in the same way as standard image classification (where EDL originates \cite{sensoy_evidential_2018}). The labels, such as A - D, are placeholders whose semantic content changes with each question, and the number of answer options can differ across datasets. Therefore, class-dependent uncertainty quantities may reflect changes in task format or answer-option structure rather than only epistemic uncertainty about the input.

Although the dependence of $\frac{K}{S}$ on $K$ is straightforward, its consequences in MCQA-based EDL evaluation have gone largely unexamined. We reproduce a published account \cite{li_calibrating_2025} whose released code scores ID and OOD inputs under different $K$, which inflates reported OOD-detection performance with no change to the underlying model; the same class-cardinality setting also appears in related implementations \cite{yang_bayesian_2024, rahmati_c-lora_2025}. Our focus is therefore on surfacing, quantifying, and mitigating this effect: a metric inflated by an output-space choice undermines trust in reported OOD performance.

This paper makes four contributions:
\begin{enumerate}
    \item We empirically demonstrate that mismatched effective class cardinality can substantially inflate AUROC and AUPR for vacuity-based OOD detection in EDL-fine-tuned LLMs.
    \item We prove that matched-$K$ expansion preserves the ranking of scores and so leaves AUROC and AUPR exactly unchanged, whereas mismatched expansion (in its numerator-only form) rescales one group's scores by a constant. The inflation therefore reflects a deterministic property of the evaluation rather than a change in uncertainty quality.
    \item We discuss why MCQA-based LLM evaluation differs from fixed-label classification and why clearer definitions of ID, OOD, and task-format shift are needed when applying EDL to LLMs.
    \item We provide a set of practical recommendations for practitioners utilising vacuity-based OOD evaluation.
\end{enumerate}

\section{Background and Related Work}
It is common to fine-tune an LLM using Parameter Efficient Fine-Tuning (PEFT) methods like LoRA \cite{hu_lora_2021} on MCQA datasets when examining how they perform with EDL \cite{li_calibrating_2025, chun_evidential_2026}. Working this way yields a constrained set of possible next token options - the options (classes) of the MCQA dataset. This section details how EDL operates over MCQA datasets with LLMs and outlines a variation of EDL - Information-Bottleneck EDL (IB-EDL) \cite{li_calibrating_2025}.

\subsection{Evidential Deep Learning (Standard EDL)}
In standard EDL, the output head applies a non-negative activation (e.g.\ \textit{Softplus}) to the target logits (those corresponding to the MCQA answer options). This generates the non-negative evidence $e_i$ the model assigns each class, from which the Dirichlet parameters are $\alpha_i = e_i + 1$. The prediction is the class with the highest expected probability, $\hat{y} = \arg\max_i \mathbb{E}[p_i] = \arg\max_i \alpha_i/S$, where $S=\sum_{i=1}^{K}\alpha_i$ is the total strength of belief (we use $S$ and $S_K$ interchangeably for the total Dirichlet strength). As mentioned, vacuity (\textit{UM}) is then $\frac{K}{S}$. The model is fine-tuned with the standard EDL objective \cite{sensoy_evidential_2018}, $\mathcal{L}_{\mathrm{EDL}} = \mathcal{L}_{\mathrm{MSE}} + \lambda\,\mathcal{L}_{\mathrm{Reg}}(\theta)$, combining a Dirichlet-based MSE (mean squared error) term with a KL regulariser that penalises evidence assigned to incorrect classes.

\subsection{Information-Bottleneck EDL (IB-EDL)}
A known limitation of standard EDL is that, because the model is only penalised for evidence on incorrect classes, the evidence assigned to the correct class is unbounded, which can produce an overconfident model. IB-EDL \cite{li_calibrating_2025} addresses this with an information bottleneck that encourages the model to retain only the evidence necessary for a correct prediction. Its objective, $\min_\theta \mathcal{L}_{\mathrm{IB\text{-}MSE}} + \beta\,\mathcal{L}_{\mathrm{IB\text{-}Info}}(\theta)$, re-derives the EDL MSE term within the Information Bottleneck framework and adds a term $\mathcal{L}_{\mathrm{IB\text{-}Info}}$ that regularises the information retained in the latent representation, thereby constraining the evidence available for prediction.

\subsection{Metrics Used in Evaluation}
There are a variety of methods used to assess a fine-tuned LLM's performance when using EDL or one of its variations. One of the most commonly used ones is Expected Calibration Error (ECE) \cite{posocco2021estimating, li_calibrating_2025, nemani_efficient_2025}. Another method used to assess model performance in general across modalities (Computer Vision, CV, for instance) is an assessment of how well the model performs when conducting Out-of-Distribution (OOD) detection. This is typically carried out by using the Area Under the Receiver Operating Characteristic curve (AUROC) \cite{fawcett2006introduction} or Area Under the Precision-Recall curve (AUPR) \cite{davis2006relationship} using metrics like UM (\textit{vacuity}) with ID and OOD inputs \cite{li_calibrating_2025, aguilar_cedl_2025, chen_revisiting_2024}. We raise concerns over evaluating model performance using OOD with UM in this manner as will be presented below. Broader skepticism over vacuity-based uncertainty measured as $\frac{K}{S}$ has been raised by Gao et al. describing the formulation as "somewhat arbitrary" \cite{Gao_survey_edl} noting that it does not follow from subjective logic and may not achieve practical effectiveness.

\section{Impact of Altering $K$}
Because AUROC and AUPR depend on scores only through their ranking, the relevant question is not whether vacuity changes under class expansion but whether it changes \emph{differently} for ID and OOD. The two propositions below separate these cases.

\begin{proposition}[Matched expansion preserves AUROC and AUPR exactly]
\label{prop:matched}
Let $u_K = K/S_K$ with $S_K=\sum_{i=1}^{K}\alpha_i$. Suppose $m$ zero-evidence classes ($e_{K+j}{=}0$, i.e.\ $\alpha_{K+j}{=}1$) are appended to \emph{every} example, ID and OOD alike.
Then
\[
u_{K+m} = \frac{K+m}{S_K+m} = \frac{u_K\,(K+m)}{K + m\,u_K},
\]
which is strictly increasing in $u_K$ and depends only on $K$ and $m$ (Appendix~\ref{sec:proof}). The same order-preserving map is therefore applied to every example, and AUROC and AUPR are \emph{exactly} unchanged.
\end{proposition}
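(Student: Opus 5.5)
The plan has three parts: compute the new strength, rewrite the new vacuity as a function of $u_K$, and then check that AUROC and AUPR only see the ordering of scores.

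First, the new strength. Each appended class has $\alpha_{K+j}=1$, so $S_{K+m}=S_K+m$. This gives $u_{K+m}=(K+m)/(S_K+m)$ directly.

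Second, the change of variables. Since $e_i\ge 0$, we have $\alpha_i\ge 1$ and hence $S_K\ge K>0$. This makes $u_K\in(0,1]$ well defined, and we can write $S_K=K/u_K$. Substituting and multiplying numerator and denominator by $u_K$ gives
\[
u_{K+m}=\frac{K+m}{K/u_K+m}=\frac{u_K(K+m)}{K+m\,u_K}=:f(u_K).
\]
Set $f(x)=(K+m)x/(K+mx)$ on $x>0$. Its parameters are only $K$ and $m$, so the same map is applied to every example. For monotonicity, I would use the derivative
\[
f'(x)=\frac{(K+m)(K+mx)-(K+m)x\,m}{(K+mx)^2}=\frac{K(K+m)}{(K+mx)^2}>0 .
\]
Alternatively, $u_{K+m}=(K+m)/(S_K+m)$ is strictly decreasing in $S_K$, and $S_K=K/u_K$ is strictly decreasing in $u_K$; composing the two gives strict increase.

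Third, invariance of the metrics. Let $f$ be strictly increasing and applied to all scores. Then for any pair of examples, $f(u)>f(u')\iff u>u'$ and $f(u)=f(u')\iff u=u'$; ties are preserved because $f$ is injective. AUROC equals the Mann--Whitney statistic $P(u_{\mathrm{OOD}}>u_{\mathrm{ID}})+\tfrac12 P(u_{\mathrm{OOD}}=u_{\mathrm{ID}})$, which depends only on these pairwise comparisons, so it is unchanged. For AUPR, I would note that thresholding $u$ at $t$ selects the same set of examples as thresholding $f(u)$ at $f(t)$. The two score sets therefore induce the same family of predicted-positive sets, and hence the same sequence of (precision, recall) points. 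Whether AUPR is computed by trapezoidal interpolation or as average precision, it is a function of that sequence alone, so it is identical.

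The main obstacle is stating this rank-invariance cleanly for AUPR, in particular the handling of ties and the fact that the points are indexed by distinct score values. I would handle it by noting that $f$ induces a bijection between the distinct score values that preserves their order. The sorted order, tie groups and confusion matrices at each cutoff are then literally identical.
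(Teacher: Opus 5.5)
Your proposal is correct and follows the paper's proof essentially step for step: the paper also computes $S_{K+m}=S_K+m$, substitutes $S_K=K/u_K$, obtains the same derivative $K(K+m)/(K+m\,u_K)^2>0$, and concludes that a strictly increasing map depending only on $K$ and $m$ leaves the rank statistics AUROC and AUPR unchanged. Your handling of the last step (the Mann--Whitney form, matching thresholds, and ties) just spells out what the paper asserts in one line. It also carries over directly to the paper's actual score $1/u$ with ID as the positive class, because $1/f(u)=\frac{K}{K+m}\cdot\frac{1}{u}+\frac{m}{K+m}$ is an increasing affine function of $1/u$.
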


\begin{proposition}[Mismatched expansion rescales one group]
\label{prop:mismatched}
If the evaluated cardinality is inflated for the OOD set only, holding $S$ fixed, then $u^{\mathrm{OOD}} \mapsto (K_{\mathrm{OOD}}/K_{\mathrm{ID}})\,u^{\mathrm{OOD}}$ while ID scores are untouched. The evaluation multiplies one group's scores by a constant (e.g., $1.25$ for $K_{\mathrm{ID}}{=}4 \to K_{\mathrm{OOD}}{=}5$); the induced change in AUROC is determined entirely by the overlap of the two vacuity distributions.
\end{proposition}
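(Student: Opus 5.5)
The plan is to make precise what ``inflated for the OOD set only, holding $S$ fixed'' means, and then express AUROC as a pairwise-comparison probability. The ID scores are $u^{\mathrm{ID}} = K_{\mathrm{ID}}/S$. Under the numerator-only inflation the OOD scores become $u^{\mathrm{OOD}}_{\mathrm{new}} = K_{\mathrm{OOD}}/S$, with the same total strength $S$ computed from the same Dirichlet parameters. Write $c = K_{\mathrm{OOD}}/K_{\mathrm{ID}}$. For every OOD example,
\[
u^{\mathrm{OOD}}_{\mathrm{new}} = \frac{K_{\mathrm{OOD}}}{S} = c\,\frac{K_{\mathrm{ID}}}{S} = c\,u^{\mathrm{OOD}},
\]
which is exactly the claimed rescaling; the example $c = 5/4 = 1.25$ is immediate. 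Note the contrast with Proposition~\ref{prop:matched}: there, one common monotone map acts on both groups, whereas here the map $x\mapsto cx$ acts on one group only. Ranks across groups can therefore change, even though ranks within each group are preserved because $c>0$.

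For the AUROC statement, I would use the Mann--Whitney representation, treating OOD as the positive class and taking vacuity as the score:
\[
\mathrm{AUROC} = \Pr\bigl(U^{\mathrm{OOD}} > U^{\mathrm{ID}}\bigr) + \tfrac12 \Pr\bigl(U^{\mathrm{OOD}} = U^{\mathrm{ID}}\bigr),
\]
with $U^{\mathrm{OOD}}$ and $U^{\mathrm{ID}}$ drawn independently from the two empirical distributions. After rescaling, AUROC becomes the same expression with $cU^{\mathrm{OOD}}$ in place of $U^{\mathrm{OOD}}$. Its change is then
\[
\Pr\bigl(U^{\mathrm{ID}}/c < U^{\mathrm{OOD}} \le U^{\mathrm{ID}}\bigr),
\]
adjusted by tie terms. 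This is the probability mass of pairs that lie in the overlap band $[u/c,\,u]$. It depends only on the joint placement of the two vacuity distributions, so it is ``determined entirely by the overlap.''

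Several consequences follow directly:
\begin{itemize}
\item The change is nonnegative for $c>1$, so AUROC can only increase.
\item The change is zero if the distributions are already separated, i.e.\ no OOD score lies in $[u^{\mathrm{ID}}/c,\,u^{\mathrm{ID}}]$ for any ID score.
\item The change is bounded above by the mass of pairs where OOD does not already exceed ID.
\end{itemize}
Since vacuity lies in $(0,1]$ when $S\ge K$, I would also note that $c$ can push OOD scores above the maximum attainable ID vacuity, which creates artificial separation.

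The main obstacle is that the proposition's phrase ``determined entirely by the overlap'' is informal. I would handle it by stating the pairwise formula above as the precise meaning. For AUPR I would not claim a closed form, only that it is likewise a function of the merged ranking, which changes solely through swapped cross-group pairs in that band.
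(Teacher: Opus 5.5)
Your rescaling step is the paper's argument exactly. With $S$ and the underlying $K_{\mathrm{ID}}$-class parameters frozen, $K_{\mathrm{OOD}}/S=(K_{\mathrm{OOD}}/K_{\mathrm{ID}})\cdot K_{\mathrm{ID}}/S$, and ID scores are never touched. The difference lies in the final clause. The paper's proof stops after the rescaling and treats ``determined entirely by the overlap'' as an informal gloss, backed only by the later Remark and the tables. It adds one point you omit: MP and normalised entropy are computed from the unchanged $K_{\mathrm{ID}}$-class probability vector, so they are exactly invariant under this construction. That is what explains the $\Delta=0.000$ entries for Implementation B.

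Your Mann--Whitney decomposition proves a precise version of the clause. Writing $X=U^{\mathrm{OOD}}$ and $Y=U^{\mathrm{ID}}$, the change is $\Pr(Y/c<X<Y)+\tfrac12\Pr(X=Y)+\tfrac12\Pr(X=Y/c)$. This is the cross-group mass in the band $[Y/c,\,Y]$, with endpoints at half weight. Three consequences follow: the inflation is nonnegative for $c>1$, it vanishes exactly when no pair falls in that band, and it is capped at $1-\mathrm{AUROC}$. Together these make precise the intuition in the paper's later Remark, that a constant rescale moves AUROC most where the distributions overlap most.

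Your orientation matches the paper's. Scoring with $1/u$ and treating ID as the positive class gives the same $\Pr(X>Y)+\tfrac12\Pr(X=Y)$. In short, the paper's version supplies the MP and entropy invariance that the experiments rely on. Yours justifies the AUROC claim and the direction of the effect, which the paper only asserts.
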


\begin{remark}[Pointwise invariance]
Pointwise invariance is a strictly stronger and unnecessary condition: $u_{K+1}=u_K$ holds iff $\alpha_{K+1}=S_K/K$, equivalently $e_{K+1}=S_K/K-1$ (Appendix~\ref{sec:proof}). Proposition~\ref{prop:matched} shows the metrics are preserved even though this condition fails.
\end{remark}

Both propositions are deliberately narrow. Proposition~\ref{prop:matched} concerns appended zero-evidence classes; it does not imply that two datasets with genuinely different answer-option counts become comparable once the evaluated $K$ is equalised, since matching $K$ there requires altering the evaluation set itself (Section~\ref{sec:impA}). Proposition~\ref{prop:mismatched} describes the numerator-only construction used for Implementation B, and does not describe Implementation A: there the spurious fifth class carries the model's actual "E"-token evidence, so $S$ changes per example and the re-scoring is not a constant multiple. The two constructions are separated in Section~\ref{sec:impB}.

\subsection{Isolating K}
Table~\ref{tab:k_expansion_results} presents results for predictions generated with standard EDL using our code (Implementation B, detailed in the next section), with Llama3-8B fine-tuned on the OBQA \cite{OpenBookQA2018} dataset as ID and ARC-C as OOD \cite{clark2018think}, where $K$ is isolated. The model predictions are held fixed and we recompute the reciprocal of UM (measured as $\frac{1}{UM}$) under varying evaluated $K$; any change in AUROC or AUPR (ID=1, OOD=0) is therefore a deterministic property of the evaluation, not of model behaviour. \textit{OOD-only} expansion refers to where the set of evidence for the OOD example only has been expanded to include $x$ additional evidence terms where $K + x$ and $0 \leq x \leq 4$. The appended evidence is $0$ (as a reminder, $\alpha_i = e_i + 1$ and thus, expansion still changes both $K$ and $S$, but not in the proportional manner required for vacuity invariance).\footnote{This illustrative expansion appends explicit zero-evidence classes (changing both $K$ and $S$); the per-dataset results in Appendix~\ref{sec:detailed_results} instead hold $S$ fixed and vary only the numerator $K$. The two yield near-identical AUROC ($0.851$ vs $0.859$), confirming the effect is a property of the $K$ mismatch, not the construction} \textit{Matched} expansion refers to where the evidence has been appended to both ID and OOD examples. As is evident by the results, when $K$ increases for OOD-only, the results show a substantial (unjustified) improvement even when $K$ has only increased by 1 (AUROC: +0.270, AUPR: +0.507). This demonstrates that vacuity-based OOD performance can be dominated by evaluated class cardinality rather than uncertainty quality. 

\begin{table}[!ht]
\centering
\caption{Effect of class-cardinality expansion on OOD detection performance using OBQA $\rightarrow$ ARC-C. Deltas are computed relative to the baseline ($K_{\text{ID}}{=}4, K_{\text{OOD}}{=}4$). Values of $K$ used as the effective evaluated output dimensionality when computing evidence, $S$, and OOD scores.}
\label{tab:k_expansion_results}
\begin{tabular}{lcccccc}
\hline
\textbf{Condition} & $K_{\text{ID}}$ & $K_{\text{OOD}}$ & \textbf{AUROC} & $\Delta$ & \textbf{AUPR} & $\Delta$ \\
\hline
Baseline              & 4 & 4 & \textbf{0.581} & 0.000 & 0.346 & 0.000 \\
OOD-only expansion    & 4 & 5 & 0.851 & $\uparrow$ 0.270 & 0.853 & $\uparrow$ 0.507 \\
OOD-only expansion    & 4 & 6 & 0.898 & $\uparrow$ 0.316 & 0.912 & $\uparrow$ 0.566 \\
OOD-only expansion    & 4 & 7 & 0.914 & $\uparrow$ 0.333 & 0.929 & $\uparrow$ 0.583 \\
OOD-only expansion    & 4 & 8 & 0.926 & $\uparrow$ 0.345 & 0.939 & $\uparrow$ 0.593 \\
Matched expansion     & 5 & 5 & 0.581 & 0.000 & 0.346 & 0.000 \\
Matched expansion     & 6 & 6 & 0.581 & 0.000 & 0.346 & 0.000 \\
Matched expansion     & 7 & 7 & 0.581 & 0.000 & 0.346 & 0.000 \\
Matched expansion     & 8 & 8 & 0.581 & 0.000 & 0.346 & 0.000 \\
\hline
\end{tabular}
\end{table}

\section{Impact in Practice}
\subsection{OOD using UM and MP}
A motivating example of the unreliability of using vacuity-based uncertainty as a means of detecting OOD inputs in practice can be seen through a faithful reproduction of the work of the authors of IB-EDL \cite{li_calibrating_2025}. As discussed above, IB-EDL aims to reduce spurious evidence by applying a \textit{bottleneck} regularisation over the evidence. The results show that when Llama3-8B \cite{grattafiori2024llama3herdmodels} is fine-tuned on variations of EDL and equivalent methods (e.g., R-EDL \cite{chen_revisiting_2024} and I-EDL \cite{deng_uncertainty_2023}), IB-EDL generally maintains or increases accuracy across the MCQA datasets tested and reports lower ECE. This suggests that IB-EDL does indeed work better in practice when calibrating an LLM.

An important consideration that arose during reproduction however was that OOD detection was achieved using the reciprocal of UM ($\frac{1}{UM}$) and Max Probability (MP) \cite{hendrycks2017a} with ID=1 and OOD=0. AUROC results from \cite{li_calibrating_2025} report that both using the UM and MP results in strong OOD detection across EDL methods, models, and datasets indicating that both metrics are useful when used in the context of detecting OOD inputs. Specifically, the published AUROC for reciprocal of UM across all tested MCQA datasets for Llama3-8B with IB-EDL was reported to be between 85.45 and 94.77, and the MP was reported to be between 88.14 and 89.16. During reproduction, while it was possible to achieve close to the reported number using the authors' provided GitHub implementation \footnote{https://github.com/sandylaker/ib-edl/}, it was not possible to recreate close to the reported numbers for UM nor MP using an independent implementation. A detailed account of the full reproduction procedure can be found in the Appendix \ref{sec:implementation_procedure}. This prompted an inspection of the released dataset helper code that indicated that for all OOD tests in the main paper, while the number of classes ($K$) for the ID dataset was four, the number of classes for the OOD datasets tested was five. The result was a much larger difference between ID and OOD uncertainties and probabilities. This highlighted two issues. First, vacuity-based uncertainty is sensitive to $K$ (naturally, as $\frac{K}{S}$): when class cardinality differs, the result is a much larger separation between ID and OOD in AUROC. Second, once corrected, UM and MP detect OOD only weakly for the datasets examined in the main paper. We include MP for completeness, in line with IB-EDL \cite{li_calibrating_2025}, but it is not governed by the same $\frac{K}{S}$ dependence as vacuity and is not a primary object of analysis.

\subsection{Implementation Details}
In order to perform a faithful reproduction of the work of \cite{li_calibrating_2025}, we used both the code associated with the paper (which we will refer to as Implementation A) and our own separate implementation (which we will refer to as Implementation B) using both IB-EDL and standard EDL to fine-tune Llama3-8B \cite{grattafiori2024llama3herdmodels} on the OBQA dataset \cite{OpenBookQA2018} (following \cite{li_calibrating_2025}) with the same training arguments and number of training steps (exact details in the Appendix \ref{sec:implementation_procedure}). We further used the same OOD datasets as the main paper \cite{li_calibrating_2025} to test the fine-tuned models: ARC-E \cite{clark2018think}, ARC-C \cite{clark2018think}, CSQA \cite{talmor-etal-2019-commonsenseqa}. We separately test MMLU Math \cite{hendryckstest2021} as was tested by Li et al. \cite{li_calibrating_2025} in their Appendix as an example of a dataset from a different domain (maths vs. general knowledge QA). Table~\ref{tab:k_id_ood} shows two distinct sources of ID/OOD mismatch: ARC-C and ARC-E are four-option tasks but Implementation~A scores them at $K{=}5$ (we use $K{=}4$ in Implementation~B), whereas CSQA is scored at its correct $K{=}5$ yet, as a five-option task, is mismatched against the four-option ID. Note that MMLU Math is itself a four-option task, so its correct cardinality ($K{=}4$) already matches OBQA; we additionally report a $K{=}5$ mismatch probe for it (as for ARC) to show that the same class-cardinality effect arises even for a far-OOD dataset. Further details about the ARC datasets label choice frequency can be found in the Appendix~\ref{sec:arc_answer_key_cardinality}.

\begin{table}[ht]
\centering
\caption{Answer-option cardinality per dataset. OOD detection is comparable only when $K_{\mathrm{OOD}}{=}K_{\mathrm{ID}}{=}4$ (OBQA); \textbf{bold} marks $K\neq4$.}
\label{tab:k_id_ood}
\begin{tabular}{llccc}
\hline
Dataset & Role & Native $K$ & Impl.\ A & Impl.\ B \\
\hline
OBQA      & ID  & 4 & 4 & 4 \\
ARC-C     & OOD & 4 & \textbf{5} & 4 \\
ARC-E     & OOD & 4 & \textbf{5} & 4 \\
CSQA      & OOD & \textbf{5} & \textbf{5} & \textbf{5} \\
MMLU Math & OOD & 4 & 4 & 4 \\
\hline
\end{tabular}
\end{table}

\subsection{Results of Implementation A}
\label{sec:impA}
As previously stated, Implementation A refers to the code provided by the authors of IB-EDL~\cite{li_calibrating_2025}. In reviewing Implementation A, we observed that the variable for K (n\_labels) had been set to 5 rather than 4 for ARC-C and ARC-E. We note that prior implementations \cite{yang_bayesian_2024, li_calibrating_2025, rahmati_c-lora_2025} use \texttt{n\_labels}\,$=$\,5 for ARC-C and ARC-E, although $7{,}750$ of $7{,}787$ questions ($99.52\%$) have only four answer options and the dataset card specifies the label set as A - D \cite{clark2018think} (Appendix~\ref{sec:arc_answer_key_cardinality}). The setting appears to originate in the Laplace-LoRA codebase \cite{yang_bayesian_2024} (see the ARC config\footnote{\url{https://github.com/MaximeRobeyns/bayesian_lora/blob/4e99db8df6860eaf0e4bc1a168fb3faa45a094f0/examples/configs/dset/arc.yaml#L9}}) and has propagated to subsequent implementations, including IB-EDL \cite{li_calibrating_2025} and, more recently, C-LoRA \cite{rahmati_c-lora_2025} (NeurIPS 2025; see the C-LoRA ARC implementation\footnote{\url{https://github.com/ahra99/c_lora/blob/e706627cc5f62d25515b49d0a55c733ba31aec82/dataset/S2SDataset_Classification.py#L22}}). The mismatch is therefore a propagating implementation setting rather than a one-off occurrence, reinforcing the need for the per-dataset $K$ checks we recommend. Because the setting is applied uniformly across all methods compared within each of these works, it does not differentially affect the relative comparisons reported there; the concern raised here is instead with the interpretation of absolute vacuity-based OOD scores and with comparisons drawn across datasets or against previously reported values.

Re-running Implementation~A with $K{=}4$ for ARC-C and ARC-E drops both AUROC and AUPR substantially. For CSQA we instead match $K$ by excluding answers whose correct label is option-"E" so the four-option evaluation retains a valid correct label; as CSQA is used as OOD input rather than for accuracy, this isolates the effect of the evaluated output space. Standard EDL shows the same pattern as IB-EDL. Table~\ref{tab:um_summary} summarises the UM (vacuity) AUROC; the full MP, UM, and AUPR breakdown for both implementations is in Appendix~\ref{sec:detailed_results}.

These results are obtained using the authors' implementation, in which ARC-C and ARC-E were evaluated with five-dimensional evidence/concentration vectors despite having four answer options. This matters because the expected probability for each class is calculated as $\alpha_i/S$, where $S$ is the total Dirichlet strength over the evaluated class set. Increasing the effective output dimensionality can therefore lower the maximum expected probability for OOD examples, increasing separation from the four-class ID scores and thereby inflating AUROC and AUPR (as seen in both tables).

\begin{table}[!ht]
\centering
\small
\caption{Summary: UM (vacuity) AUROC under the mismatched-$K$ evaluation versus the corrected matched-$K$ evaluation, for both EDL variants and both implementations (ID: OBQA). Mismatched is $K_{\mathrm{OOD}}{=}5$ (CSQA: 5-class as-is); corrected is matched $K{=}4$ (CSQA: option "E" removed); $\Delta$ (corrected$-$mismatched, $\downarrow$) is the resulting drop. Full MP/UM/AUPR detail is in Appendix~\ref{sec:detailed_results}.}
\label{tab:um_summary}
\setlength{\tabcolsep}{4pt}
\begin{tabular}{l ccc ccc}
\toprule
& \multicolumn{3}{c}{Implementation A} & \multicolumn{3}{c}{Implementation B} \\
\cmidrule(lr){2-4}\cmidrule(lr){5-7}
Dataset & mismatch ($K$=5) & correct ($K$=4) & $\Delta$ & mismatch ($K$=5) & correct ($K$=4) & $\Delta$ \\
\midrule
\multicolumn{7}{l}{\textit{Standard EDL}} \\
MMLU Math & 0.951 & 0.918 & $\downarrow$0.032 & 0.952 & 0.898 & $\downarrow$0.054 \\
ARC-C     & 0.866 & 0.601 & $\downarrow$0.265 & 0.859 & 0.581 & $\downarrow$0.278 \\
ARC-E     & 0.855 & 0.546 & $\downarrow$0.309 & 0.840 & 0.494 & $\downarrow$0.346 \\
CSQA      & 0.873 & 0.650 & $\downarrow$0.222 & 0.845 & 0.593 & $\downarrow$0.252 \\
\midrule
\multicolumn{7}{l}{\textit{IB-EDL}} \\
MMLU Math & 0.979 & 0.865 & $\downarrow$0.114 & 0.939 & 0.723 & $\downarrow$0.216 \\
ARC-C     & 0.967 & 0.670 & $\downarrow$0.297 & 0.976 & 0.631 & $\downarrow$0.345 \\
ARC-E     & 0.961 & 0.596 & $\downarrow$0.365 & 0.972 & 0.545 & $\downarrow$0.427 \\
CSQA      & 0.941 & 0.597 & $\downarrow$0.344 & 0.962 & 0.687 & $\downarrow$0.275 \\
\bottomrule
\end{tabular}
\end{table}

One caveat applies specifically to the CSQA rows. Because CSQA is natively a five-option task, its four-class evaluation is produced by removing option-"E" examples, which changes both the evaluated cardinality and the composition of the OOD set (reflected in the shift of the AUPR base rate from $0.291$ to $0.337$). The CSQA $\Delta$ therefore conflates the class-cardinality effect with a change in the evaluation set. The ARC-C and ARC-E examples do not have this confound as the same examples are re-scored under $K{=}4$ and $K{=}5$ with the AUPR base rate held fixed within each dataset ($0.299$ for ARC-C, $0.174$ for ARC-E for Implementation A).
\subsection{Results of Implementation B}
\label{sec:impB}
Implementation B is our independent reimplementation of IB-EDL and standard EDL (setup in Appendix~\ref{sec:implementation_procedure}).

A core difference between implementations A and B is that predictions for Implementation B with ARC-C and ARC-E were generated using four classes rather than the five used in Implementation A. The mismatched $K{=}5$ setting for Implementation B is the numerator-only construction of Proposition~\ref{prop:mismatched}: only the $\frac{K}{S}$ ratio underlying UM changes. MP and normalised entropy ($H_{norm}$), computed from the unchanged four-class probabilities, are therefore exactly unaffected ($\Delta{=}0.000$ in Tables~\ref{tab:own_ibedl_num_classes_delta}, \ref{tab:own_std_edl_num_classes_delta}, and~\ref{tab:ne_std_ibedl_imp_b}). This contrasts with Implementation A, where the spurious fifth class carries the model's real "E"-token evidence and therefore also lowers MP and $H_{norm}$ (Tables~\ref{tab:ib_edl_num_classes_delta}, \ref{tab:standard_edl_num_classes_delta}, and \ref{tab:ne_std_ibedl_delta_implA}). The standard EDL results for Implementation B exhibit the same pattern. The weak matched-$K$ results for ARC-C and ARC-E should not necessarily be interpreted as evidence that the metric cannot detect any distribution shift. Rather, these datasets may constitute near-OOD \cite{winkens2020contrastive} shifts relative to OBQA because they remain MCQA science/reasoning datasets. In order to assess this, in line with \cite{li_calibrating_2025}, we also evaluate using the MMLU Math MCQA dataset \cite{hendryckstest2021}. The stronger performance on MMLU Math suggests that UM and MP may be more informative under larger domain shifts. This reinforces the need to distinguish dataset shift, domain shift, and task-format shift when evaluating OOD detection in MCQA-based EDL, something which is discussed in more detail below. To confirm that these findings are not specific to Llama3-8B, Appendix~\ref{sec:qwen_impB} repeats the entire Implementation B pipeline with a Qwen3.5-9B-Base \cite{qwen3.5-base} backbone and observes the same class-cardinality pattern.

\subsection{Restoring Comparability}
The correction is straightforward: score ID and OOD under identical $K$. This in practice means verifying that \texttt{n\_labels} matches the number of answer options for every dataset. Re-scoring the same frozen predictions under the correct $K$ collapses the inflated separation back to its honest value. Figure~\ref{fig:um_dist} makes this concrete for Standard EDL on OBQA$\rightarrow$ARC-E under Implementation~A: under the mismatch ($K_{\mathrm{OOD}}{=}5$) the OOD vacuity distribution is visibly shifted away from ID (AUROC $=0.855$), whereas under matched $K{=}4$ the distributions overlap and AUROC falls to $0.546$ - close to the near-OOD chance level expected for two MCQA science-reasoning datasets.

We stress, however, that a superficially "$K$-normalised" score is \emph{necessary but not sufficient}. Under Implementation~B's numerator-only mismatch, $K$ enters the vacuity numerator alone, so normalised entropy ($H/\log_2 K$) is invariant; under the realistic Implementation~A mismatch it is inflated by up to $0.311$ AUROC (Appendix~\ref{sec:ent_analysis}), because the added class also perturbs the underlying evidence. Matching $K$ is the reliable correction.

\begin{remark}
For the numerator-only mismatch (Implementation~B), Proposition~\ref{prop:mismatched} predicts that a constant rescale of one group's scores moves AUROC most where the distributions overlap most. Table~\ref{tab:um_summary} shows the consequence: under mismatched $K$ the four OOD datasets compress into a narrow band (IB-EDL, Implementation B: $0.939$ - $0.976$), whereas under matched $K$ they separate across $0.545$ - $0.723$. Near-OOD ARC-E and far-OOD MMLU Math are indistinguishable here ($0.972$ vs $0.939$) and separable once corrected ($0.545$ vs $0.723$). Mismatched-$K$ scoring therefore removes the metric's ability to discriminate between OOD scenarios. Consistently, the inflation is largest for the datasets whose matched-$K$ detector is weakest (ARC-E in every column of Table~\ref{tab:um_summary}).
\end{remark}

\begin{figure}[h!]
\centering
\includegraphics[width=0.92\linewidth]{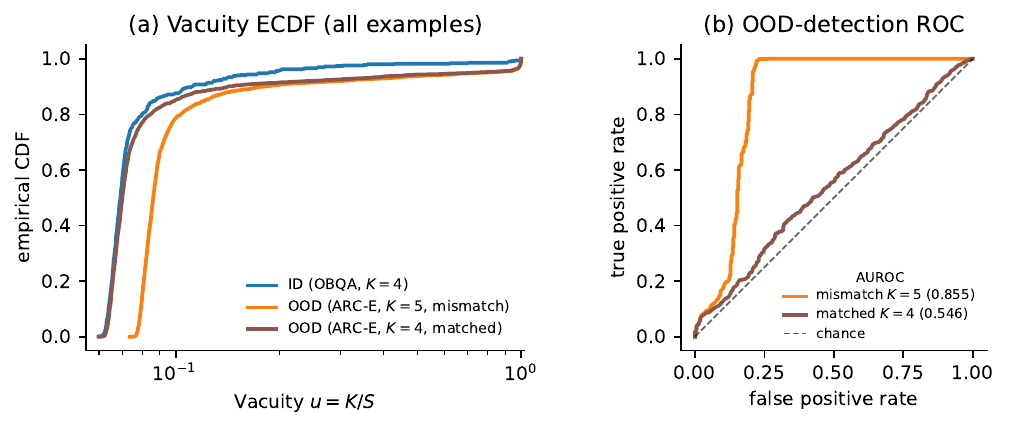}
\caption{Vacuity ($u=K/S$) for ID (OBQA) versus OOD (ARC-E) under Standard EDL (Implementation~A), using \emph{all} examples with no values clipped. \textbf{(a)} Empirical Cumulative Distribution Function (ECDF, log $x$-axis): under the mismatched $K_{\mathrm{OOD}}{=}5$ the OOD curve is shifted toward higher vacuity, whereas under the correct $K{=}4$ it nearly coincides with ID, including in the high-uncertainty tail. \textbf{(b)} The corresponding OOD-detection ROC curves: the mismatch inflates AUROC to $0.855$, while matched $K{=}4$ collapses it to $0.546$, close to the diagonal expected for two near-OOD MCQA science-reasoning datasets. Predictions are identical throughout; only the evaluated $K$ changes.}
\label{fig:um_dist}
\end{figure}

\begin{center}
\fbox{\parbox{0.95\linewidth}{%
\textbf{Practical recommendations for vacuity-based OOD evaluation.}
\begin{itemize}\setlength\itemsep{1pt}
  \item \textbf{Report $K_{\mathrm{ID}}$ and $K_{\mathrm{OOD}}$ explicitly} for every ID/OOD pair.
  \item \textbf{Verify \texttt{n\_labels}} matches the answer-option count of each dataset before scoring.
  \item \textbf{Prefer matched-$K$ evaluation}; treat any comparison with $K_{\mathrm{ID}}\neq K_{\mathrm{OOD}}$ as uninterpretable.
  \item \textbf{Do not rely on normalisation alone}; $K$-normalised scores remain sensitive to genuine cardinality mismatch.
\end{itemize}}}
\end{center}

\section{Discussion}
We do not argue that EDL or IB-EDL is ineffective as a calibration method and these findings as such should be interpreted as a clarification of evaluation conditions rather than a critique of prior work. When class cardinality is controlled, IB-EDL continues to exhibit strong calibration properties; however, OOD separability measured via class-dependent metrics requires careful interpretation. Indeed, our reproduced fine-tuning results show competitive accuracy, ECE, and NLL. Our claim is narrower: vacuity-based OOD comparisons can be misleading when the effective number of evaluated classes differs between ID and OOD datasets. This phenomenon is not unique to IB-EDL: as noted in Section~\ref{sec:impA}, the same $\texttt{n\_labels}{=}5$ setting is inherited from Laplace-LoRA \cite{yang_bayesian_2024} and propagates to later methods such as C-LoRA \cite{rahmati_c-lora_2025}. Class-cardinality sensitivity also extends beyond vacuity. For example, \cite{nemani_efficient_2025} evaluate a model trained on a 2-class dataset (Amazon Reviews \cite{amazon_reviews_dataset}) on a 10-class dataset (Yahoo Answers \cite{yahoo_answers_dataset}). While their work does not rely on vacuity, their uncertainty measures (predictive entropy, expected entropy, and mutual information) are defined over the output simplex and therefore scale with the number of classes $K$. As such, differences in uncertainty between ID and OOD settings may also reflect differences in output dimensionality rather than solely differences in data distribution. The underlying point is that evidential uncertainty measures are defined with respect to a particular output space. When the dimensionality or semantics of this space change between ID and OOD settings, the resulting uncertainty values are not strictly comparable, even when computed correctly.
The consideration of what metric to use when detecting OOD inputs raises a larger issue relating to the ambiguity of what constitutes OOD input in the context of an EDL evaluation with LLMs. In the context of an LLM fine-tuned on common-sense MCQA data, should we consider a different MCQA common-sense dataset truly OOD? While it may not reflect the precise data the model was trained on, it likely can still be considered within the realm of its ID distribution. In other words, it may be more apt to consider such datasets as \textit{near OOD} \cite{winkens2020contrastive} datasets. As seen above, when confronted with input data from a different domain to its training data (Math MMLU), MP and UM perform better as OOD detectors. Such datasets perhaps can be considered as \textit{far OOD} \cite{winkens2020contrastive} ones. Furthermore, due to the nature of LLMs and the method by which they are fine-tuned, there is another distinction we can make - \textit{task OOD}. If considering an LLM that has been fine-tuned to distinguish between four options of an MCQA input, can we consider providing it with five options instead as an alternate source of OOD? 

This is less of a concern in the CV setting EDL originates from, where the output space is fixed by the ID training labels (a CIFAR-10 classifier still emits CIFAR-10 outputs on CIFAR-100 inputs), so the explicit MCQA class-cardinality mechanism does not transfer directly. Shen et al. \cite{shen_are_2024} further argue that EDL may not capture epistemic (model) uncertainty in the Bayesian sense at all, and that where it succeeds downstream it may do so in spite of this (a caution directly relevant to interpreting vacuity as epistemic).

Underlying these considerations is an assumption that causal LLMs over MCQA are comparable to CV classifiers over images. They are not: in MCQA the output space is defined per input and class semantics are instance-dependent, so measures that depend on output dimensionality are not invariant to the number of answer options even when the task is unchanged. This makes MCQA an imperfect proxy for class-based uncertainty without explicit correction, echoing broader concerns that MCQA outputs may reflect decoding variability more than uncertainty \cite{zeng_uncertainty_2025} and that models may default to the "least incorrect option" \cite{wang2025llms}.

\section{Limitations}
There are a number of limitations associated with this paper. Firstly, the datasets used are MCQA only. While this is necessary to evaluate EDL in the context of LLMs, it is a limitation of the approach. Secondly, the evaluation is limited to two implementations of EDL - standard EDL and IB-EDL. As we provide an ablation keeping the predictions static however, we show how quickly the AUROC can change with mismatched cardinality which is something that is EDL-implementation-agnostic. Thirdly, and perhaps most importantly, is the limitation surfaced in Discussion - that using EDL with MCQAs may not be operating as expected in that EDL assumes that UM is epistemic however, that may not be the case. Finally, matched-$K$ levels come from one run per method/backbone: the $\Delta$ values are deterministic, but absolute levels are not -- IB-EDL differs by up to $0.142$ AUROC between our implementations (MMLU Math: $0.865$ vs $0.723$).

\section{Conclusion}
Our findings do not argue against evidential uncertainty itself, but against comparing vacuity-based OOD scores across mismatched class cardinalities. We argue that when considering using vacuity as UM for OOD detection, it is imperative that $K_{\mathrm{ID}}$ and $K_{\mathrm{OOD}}$ are equal and demonstrate how $K$-variance can impact results in practice. We further raise questions surrounding the direct transferability of using EDL in the context of causal language models over MCQAs and a need for greater clarity in what exactly is being thought of as OOD input. It is hoped that our work will contribute to greater transparency in how ID and OOD datasets are defined within the context of EDL with LLMs and result in greater care being taken when calculating and comparing UM as a metric for OOD detection. Our follow-up work aims to be able to clarify types of OOD input in this context and how they affect EDL in practice. 

\section*{Generative AI Disclosure}
OpenAI's ChatGPT-5 was used for sentence level polishing. Claude Code Opus 4.8 was used to scaffold the code used as part of the implementation and analysis. The authors take full responsibility for the manuscript and code and have verified all output thoroughly.

\section*{Acknowledgements}
The work undertaken in this paper was carried out during employment as an Applied AI Research Intern at Accenture Labs, Ireland.

\bibliographystyle{unsrt}  
\bibliography{bibliography, references}  


\appendix

\section{Proofs}
\label{sec:proof}

\subsection*{Proof of Proposition~\ref{prop:matched}}
\begin{proof}
AUROC and AUPR are rank statistics: for a fixed labelling of the pooled ID\,$\cup$\,OOD set, both depend on the scores only through their induced ordering. It therefore suffices to show that the re-scoring map is strictly increasing and applied identically to every example.

Appending $m$ classes with $e_{K+j}{=}0$ (so $\alpha_{K+j}{=}1$) to every example gives $S_{K+m}=S_K+m$, and substituting $S_K=K/u_K$,
\[
u_{K+m}=\frac{K+m}{S_K+m}=\frac{u_K\,(K+m)}{K+m\,u_K},
\qquad
\frac{\mathrm{d}u_{K+m}}{\mathrm{d}u_K}=\frac{K(K+m)}{(K+m\,u_K)^2}>0
\]
since $K,m\geq 1$ and $u_K>0$. The map depends only on $K$ and $m$, which are common to ID and OOD under matched expansion, so it is the same strictly increasing function for every example. The ordering of the pooled scores is preserved and AUROC and AUPR are exactly unchanged.
\end{proof}

\subsection*{Proof of Proposition~\ref{prop:mismatched}} 
\begin{proof}
Under the numerator-only construction the total strength $S$ and the underlying $K_{\mathrm{ID}}$-class probability vector are held fixed, and only the cardinality entering the numerator is inflated. The score assigned to an OOD example is therefore
\[
u'^{\,\mathrm{OOD}}=\frac{K_{\mathrm{OOD}}}{S}
=\frac{K_{\mathrm{OOD}}}{K_{\mathrm{ID}}}\cdot\frac{K_{\mathrm{ID}}}{S}
=\frac{K_{\mathrm{OOD}}}{K_{\mathrm{ID}}}\;u^{\,\mathrm{OOD}},
\]
while ID examples are scored with $K_{\mathrm{ID}}$ throughout and are unchanged. The evaluation thus multiplies one group's scores by the constant $K_{\mathrm{OOD}}/K_{\mathrm{ID}}$ -- $1.25$ for $K_{\mathrm{ID}}{=}4 \to K_{\mathrm{OOD}}{=}5$. Because MP and normalised entropy are computed from the unchanged $K_{\mathrm{ID}}$-class probability vector, they are exactly invariant under this construction, which is what Tables~\ref{tab:own_ibedl_num_classes_delta}, \ref{tab:own_std_edl_num_classes_delta}, and~\ref{tab:ne_std_ibedl_imp_b} report as $\Delta{=}0.000$.
\end{proof}

\subsection*{Proof of Remark 1}
\begin{proof}
Setting $u_{K+1}=u_K$ with $u_{K+1}=(K+1)/(S_K+\alpha_{K+1})$ and $u_K=K/S_K$ gives $(K+1)S_K = K(S_K+\alpha_{K+1})$, i.e.\ $S_K = K\alpha_{K+1}$, so $\alpha_{K+1}=S_K/K$. Since $\alpha_{K+1}=e_{K+1}+1$, this is $e_{K+1}=S_K/K-1$: vacuity is invariant under the added class iff it receives the mean concentration of the original $K$ classes.
\end{proof}
\section{ARC Answer-Option Cardinality}
\label{sec:arc_answer_key_cardinality}
To substantiate that $K{=}4$ is the correct evaluated cardinality for ARC, we counted the number of answer options per question across all six ARC-Challenge and ARC-Easy splits (Table~\ref{tab:arc_answerkey_freq}). The dataset card declares the label set as A - D \cite{clark2018think}, and $7{,}750$ of the $7{,}787$ questions ($99.52\%$) indeed have exactly four options. Of the remainder, $22$ questions ($0.28\%$) have three options and $15$ ($0.19\%$) have five. Note that a minority of questions use the numeric encoding 1 - 4 in place of A - D; these are the same four-option format under a different label set and are counted as four-option questions here. Setting \texttt{n\_labels}$=5$ for ARC therefore introduces a spurious fifth class for $99.81\%$ of examples rather than accommodating a genuine five-option task, confirming that $K{=}4$ is the appropriate evaluated output dimensionality.

\begin{table}[!ht]
\centering
\caption{Number of answer options per question, pooled across the
ARC-Challenge and ARC-Easy train, validation, and test splits ($n = 7{,}787$).}
\label{tab:arc_answerkey_freq}
\begin{tabular}{rrr}
\toprule
\textbf{Options per question} & \textbf{Questions} & \textbf{\%} \\
\midrule
3 & \phantom{7,}22 & \phantom{9}0.28 \\
4 & 7{,}750 & 99.52 \\
5 & \phantom{7,}15 & \phantom{9}0.19 \\
\bottomrule
\end{tabular}
\end{table}

\section{Implementation Procedure}
\label{sec:implementation_procedure}
The same hyperparameters were used for both Implementation A and Implementation B. Both the fine-tuning of Llama3-8B and the generation of predictions for ID and OOD inputs were carried out on a Mac Studio (Apple M3 Ultra, 512\,GB unified memory) using the Metal Performance Shaders (MPS) backend. We use float32 rather than float16 on MPS because the EDL losses (which involve softplus, logarithm, digamma, and lgamma terms) overflow the limited range of float16 and produce NaN gradients, causing training to collapse; the Mac Studio's unified memory is more than sufficient for float32. To aid reproducibility, all random number generators (Python, NumPy, and PyTorch across CPU/CUDA/MPS) are seeded with a fixed value of 42, and the same seed is applied at inference so that the stochastic IB-EDL noise draws are deterministic. Following \cite{li_calibrating_2025}, we fine-tuned Llama3-8B for 10080 steps on the OBQA dataset using LoRA with the following hyperparameters: r= 8, alpha= 16, dropout: 0.1, targeting the query and value projections and the output head. It should be noted that for both implementations, we obtained results similar, though not identical, to those reported by the authors. Such variation can arise from stochastic training effects, environment-level nondeterminism common in LLM fine-tuning, and differences in compute backend and numerical precision (float32 on MPS versus the authors' bfloat16 CUDA setup). Exact replication of fine-tuning metrics was not the primary objective of this study, as our focus was on assessing metric differences arising from class cardinality rather than differences in model training performance. Table~\ref{tab:implementation_full_details} provides a full overview of the implementation details for both implementations. Regarding the ARC datasets, Implementation B keys ARC options by letter, so questions using the numeric 1-4 encoding are excluded (22 of 1,172 ARC-Challenge and 98 of 2,376 ARC-Easy test questions, giving AUPR base rates of 0.303 and 0.180); Implementation A maps positionally and retains them (0.299, 0.174). Each implementation re-scores its own fixed example set under both K values, so the reported $\Delta$ values are unaffected. An important difference between Implementation A and B is that we observed that padding strategy can affect logit extraction when using fixed positional indexing (e.g., logits[:, -1]). To account for this, Implementation B standardises on left-padding to ensure alignment between sequence end and prediction position.

\begin{table}[!ht]
\centering
\caption{Implementation and evaluation details for both Implementation A and Implementation B.}
\label{tab:implementation_full_details}
\small
\begin{tabularx}{\linewidth}{>{\raggedright\arraybackslash}p{0.28\linewidth} X}
\hline
\textbf{Category} & \textbf{Details} \\
\hline

Base models & \texttt{meta-llama/Meta-Llama-3-8B}, \texttt{Qwen/Qwen3.5-9B-Base} \\

\hline
Hardware and precision &
Mac Studio (Apple M3 Ultra, 512\,GB unified memory), MPS backend; model loaded and fine-tuned in float32 (float16 on MPS overflows the EDL loss terms and yields NaN gradients). Device and dtype are selected automatically: CUDA/bfloat16, MPS/float32, CPU/float32. \\

\hline
Average Time to Fine-Tune &
Llama3-8B: 1 hour 40 minutes; Qwen3.5-9B-Base: 3 hours. \\

\hline
Training dataset & \texttt{allenai/openbookqa}, train split \\

Validation dataset & \texttt{allenai/openbookqa}, validation split \\

Inference datasets & 
ID: \texttt{allenai/openbookqa}, test split. 
OOD: \texttt{allenai/ai2\_arc} ARC-Challenge test split; 
\texttt{allenai/ai2\_arc} ARC-Easy test split; 
\texttt{cais/mmlu} with \texttt{college\_mathematics}, \texttt{high\_school\_mathematics}, and \texttt{abstract\_algebra} test splits; 
\texttt{commonsense\_qa}, validation split. \\

\hline
Prompt template & 
\texttt{Return the label of the correct answer for the question below.}
\newline
\texttt{Question: \{question\}}
\newline
\texttt{Choices:}
\newline
\texttt{A) ...}
\newline
\texttt{B) ...}
\newline
\texttt{Answer:} \\

\hline
Answer token set & 
OBQA, ARC, and MMLU: \texttt{ A}, \texttt{ B}, \texttt{ C}, \texttt{ D}. 
CSQA: \texttt{ A}, \texttt{ B}, \texttt{ C}, \texttt{ D}, \texttt{ E}. \\

\hline
Target logits & 
IB-EDL: last-token logits at target IDs; tail logits, corresponding to the last $K$ positions, are passed through Softplus to produce per-class $\sigma$ for IB noise injection. 
Standard EDL: last-token logits at target IDs are passed through Softplus directly to produce evidence. \\

\hline
LoRA configuration & 
Target modules: \texttt{q\_proj}, \texttt{v\_proj}, \texttt{lm\_head}. 
Rank $r=8$; LoRA $\alpha=16$; dropout $=0.1$; bias = \texttt{lora\_only}; task type = \texttt{CAUSAL\_LM}. \\

\hline
Training steps & 
10080 steps \\

\hline
Batch size & 
Per-device train batch size: 4; per-device evaluation batch size: 4; gradient accumulation steps: 1. \\

\hline
Learning rate and optimisation & 
Learning rate: $5 \times 10^{-5}$; scheduler: cosine; warmup steps: 20; weight decay: 0.001; maximum gradient norm: 20.0. \\

\hline
Seed &
Implementation A: 42. Implementation B: 42. All random number generators (Python, NumPy, and PyTorch on CPU/CUDA/MPS) are seeded explicitly, and the \texttt{TrainingArguments} \texttt{seed} and \texttt{data\_seed} are set to 42; the same seed is applied at inference so the IB-EDL noise draws are deterministic. \\

\hline
Prediction checkpoint & 
Predictions are generated from the same final checkpoint for each method: \texttt{checkpoint-10080}. \\

\hline
Metric orientation & 
Accuracy: higher is better. NLL: lower is better. ECE: lower is better. ECE computed with 15 bins (following \cite{li_calibrating_2025}). \\

\hline
$K$ per dataset &
OBQA: $K=4$; ARC-Challenge: $K=4$; ARC-Easy: $K=4$; MMLU Math: $K=4$; CSQA: primary setting $K=5$, also evaluated as $K=4$. For CSQA, two output files were generated per method: five-class A--E and four-class A--D. In both codebases the evaluated class cardinality (\texttt{n\_labels}) is exposed as a configurable option, so each OOD dataset can be scored under both its correct $K$ and the mismatched $K$ used by Implementation A (e.g. ARC at $K=5$); for the four-class evaluations, examples whose correct answer lies outside the first $K$ options are filtered out. \\

\hline
\end{tabularx}
\end{table}

\subsection{Implementation A}
Implementation A was carried out using the code and implementation procedure provided by the authors of IB-EDL \cite{li_calibrating_2025}. With this configuration (that utilises the specifications provided above) we achieved similar results to the authors as seen in Table~\ref{tab:edl_metrics_impA}.

\begin{table}[!ht]
\centering
\caption{Fine-tuning Results for Implementation A: Llama3-8B fine-tuned using the authors of IB-EDL's provided code with IB-EDL and Standard EDL on the OBQA MCQA dataset.}
\label{tab:edl_metrics_impA}
\begin{tabular}{lccc}
\hline
\textbf{Method} & \textbf{Accuracy} & \textbf{ECE} & \textbf{NLL} \\
\hline
IB-EDL        & 0.882 & 0.031 & 0.453 \\
Standard EDL & 0.880 & 0.043 & 0.442 \\
\hline
\end{tabular}
\end{table}

It is important to note that IB-EDL can make use of an additional optional hyperparameter: $\varsigma$. When running the implementation for the experiments presented above, in accordance with the authors implementation for their main experiment, we used $\varsigma=0.0$. 

\subsection{Implementation B}
Implementation B utilised our own separate implementation of IB-EDL and standard EDL - both for the fine-tuning of Llama3-8B and the generation of predictions. As mentioned, we use the precise hyperparameters as used by IB-EDL and implementation specifics such as prompt formatting. With our implementation, we were similarly able to achieve close to the reported results for Llama3-8B fine-tuned using standard EDL and IB-EDL (Table~\ref{tab:edl_metrics_impB}).

\begin{table}[!ht]
\centering
\caption{Fine-tuning Results for Implementation B: Llama3-8B fine-tuned using our code with IB-EDL and Standard EDL on the OBQA MCQA dataset.}
\label{tab:edl_metrics_impB}
\begin{tabular}{lccc}
\hline
\textbf{Method} & \textbf{Accuracy} & \textbf{ECE} & \textbf{NLL} \\
\hline
IB-EDL        & 0.876 & 0.028 & 0.439 \\
Standard EDL & 0.886 & 0.041 & 0.425 \\
\hline
\end{tabular}
\end{table}

\section{Detailed OOD Detection Results}
\label{sec:detailed_results}
Tables~\ref{tab:ib_edl_num_classes_delta} - \ref{tab:own_std_edl_num_classes_delta} give the full MP, UM, and AUPR results (with per-row $\Delta$) underlying the UM-AUROC summary in Table~\ref{tab:um_summary}, for both implementations and both EDL variants \footnote{MMLU Math with $K=5$ is provided as further evidence of how mismatched cardinality can inflate AUROC and AUPR. The mismatched $K$ in prior work is constrained to the ARC datasets and CSQA.}.

\begin{table}[!ht]
\centering
\small
\setlength{\tabcolsep}{3pt}
\caption{Implementation A: IB-EDL OOD detection results for Llama3-8B fine-tuned on OBQA
($\texttt{sigma\_mult}=0.0$). $\Delta$ denotes the performance change
(correct/removed "E" minus mismatch/original). AUPR$_\text{base}$ is the ID-ratio baseline.}
\label{tab:ib_edl_num_classes_delta}
\begin{tabular}{llc cccc cccc}
\toprule
 &  &  & \multicolumn{4}{c}{MP} & \multicolumn{4}{c}{UM} \\
\cmidrule(lr){4-7}\cmidrule(lr){8-11}
Dataset & K & AUPR$_\text{base}$
& AUROC & $\Delta$ & AUPR & $\Delta$
& AUROC & $\Delta$ & AUPR & $\Delta$ \\
\midrule
MMLU Math & 5 (mismatch) & 0.516
& 0.961 & $\downarrow$ & 0.977 & $\downarrow$
& 0.979 & $\downarrow$ & 0.986 & $\downarrow$ \\
MMLU Math & 4 (correct) & 0.516
& 0.914 & -0.047 & 0.930 & -0.047
& 0.865 & -0.114 & 0.826 & -0.160 \\
\midrule
ARC-C & 5 (mismatch) & 0.299
& 0.937 & $\downarrow$ & 0.948 & $\downarrow$
& 0.967 & $\downarrow$ & 0.966 & $\downarrow$ \\
ARC-C & 4 (correct) & 0.299
& 0.674 & -0.263 & 0.492 & -0.456
& 0.670 & -0.297 & 0.471 & -0.495 \\
\midrule
ARC-E & 5 (mismatch) & 0.174
& 0.933 & $\downarrow$ & 0.933 & $\downarrow$
& 0.961 & $\downarrow$ & 0.943 & $\downarrow$ \\
ARC-E & 4 (correct) & 0.174
& 0.595 & -0.338 & 0.244 & -0.689
& 0.596 & -0.365 & 0.238 & -0.705 \\
\midrule
CSQA & 5 (as-is) & 0.291
& 0.936 & $\downarrow$ & 0.946 & $\downarrow$
& 0.941 & $\downarrow$ & 0.886 & $\downarrow$ \\
CSQA & 4 (removed "E") & 0.337
& 0.624 & -0.312 & 0.435 & -0.511
& 0.597 & -0.344 & 0.402 & -0.484 \\
\bottomrule
\end{tabular}
\end{table}

\begin{table}[!ht]
\centering
\small
\setlength{\tabcolsep}{3pt}
\caption{Implementation A: Standard EDL OOD detection results for Llama3-8B fine-tuned on OBQA.
$\Delta$ denotes the performance change (correct/removed "E" minus mismatch/original).
AUPR$_\text{base}$ is the ID-ratio baseline.}
\label{tab:standard_edl_num_classes_delta}
\begin{tabular}{llc cccc cccc}
\toprule
 &  &  & \multicolumn{4}{c}{MP} & \multicolumn{4}{c}{UM} \\
\cmidrule(lr){4-7}\cmidrule(lr){8-11}
Dataset & K & AUPR$_\text{base}$
& AUROC & $\Delta$ & AUPR & $\Delta$
& AUROC & $\Delta$ & AUPR & $\Delta$ \\
\midrule
MMLU Math & 5 (mismatch) & 0.516
& 0.950 & $\downarrow$ & 0.965 & $\downarrow$
& 0.951 & $\downarrow$ & 0.965 & $\downarrow$ \\
MMLU Math & 4 (correct) & 0.516
& 0.912 & -0.038 & 0.925 & -0.040
& 0.918 & -0.032 & 0.928 & -0.037 \\
\midrule
ARC-C & 5 (mismatch) & 0.299
& 0.879 & $\downarrow$ & 0.894 & $\downarrow$
& 0.866 & $\downarrow$ & 0.876 & $\downarrow$ \\
ARC-C & 4 (correct) & 0.299
& 0.603 & -0.277 & 0.378 & -0.516
& 0.601 & -0.265 & 0.373 & -0.503 \\
\midrule
ARC-E & 5 (mismatch) & 0.174
& 0.871 & $\downarrow$ & 0.867 & $\downarrow$
& 0.855 & $\downarrow$ & 0.840 & $\downarrow$ \\
ARC-E & 4 (correct) & 0.174
& 0.547 & -0.324 & 0.213 & -0.654
& 0.546 & -0.309 & 0.207 & -0.632 \\
\midrule
CSQA & 5 (as-is) & 0.291
& 0.888 & $\downarrow$ & 0.896 & $\downarrow$
& 0.873 & $\downarrow$ & 0.873 & $\downarrow$ \\
CSQA & 4 (removed "E") & 0.337
& 0.648 & -0.240 & 0.419 & -0.477
& 0.650 & -0.222 & 0.427 & -0.446 \\
\bottomrule
\end{tabular}
\end{table}

\begin{table}[!ht]
\centering
\small
\setlength{\tabcolsep}{2.5pt}
\caption{Implementation B: IB-EDL OOD detection results for Llama3-8B fine-tuned on OBQA.
$\Delta$ denotes the performance change (correct / removed "E" minus mismatch / as-is).
AUPR$_\text{base}$ is the ID-ratio baseline.}
\label{tab:own_ibedl_num_classes_delta}
\begin{tabular}{llc cccc cccc}
\toprule
 &  &  & \multicolumn{4}{c}{MP} & \multicolumn{4}{c}{UM} \\
\cmidrule(lr){4-7}\cmidrule(lr){8-11}
Dataset & K & AUPR$_\text{base}$
& AUROC & $\Delta$ & AUPR & $\Delta$
& AUROC & $\Delta$ & AUPR & $\Delta$ \\
\midrule
MMLU Math & 5 (mismatch) & 0.516
& 0.870 & - & 0.902 & -
& 0.939 & $\downarrow$ & 0.869 & $\downarrow$ \\
MMLU Math & 4 (correct) & 0.516
& 0.870 & 0.000 & 0.902 & 0.000
& 0.723 & -0.216 & 0.634 & -0.235 \\
\midrule
ARC-C & 5 (mismatch) & 0.303
& 0.629 & - & 0.429 & -
& 0.976 & $\downarrow$ & 0.954 & $\downarrow$ \\
ARC-C & 4 (correct) & 0.303
& 0.629 & 0.000 & 0.429 & 0.000
& 0.631 & -0.345 & 0.411 & -0.543 \\
\midrule
ARC-E & 5 (mismatch) & 0.180
& 0.519 & - & 0.191 & -
& 0.972 & $\downarrow$ & 0.952 & $\downarrow$ \\
ARC-E & 4 (correct) & 0.180
& 0.519 & 0.000 & 0.191 & 0.000
& 0.545 & -0.427 & 0.207 & -0.745 \\
\midrule
CSQA & 5 (as-is) & 0.291
& 0.894 & $\downarrow$ & 0.909 & $\downarrow$
& 0.962 & $\downarrow$ & 0.940 & $\downarrow$ \\
CSQA & 4 (removed "E") & 0.337
& 0.677 & -0.217 & 0.514 & -0.395
& 0.687 & -0.275 & 0.489 & -0.451 \\
\bottomrule
\end{tabular}
\end{table}

\begin{table}[!ht]
\centering
\small
\setlength{\tabcolsep}{2.5pt}
\caption{Implementation B: Standard EDL OOD detection results for Llama3-8B fine-tuned on OBQA.
$\Delta$ denotes the performance change
(correct / removed "E" minus mismatch / as-is).
AUPR$_\text{base}$ is the ID-ratio baseline.}
\label{tab:own_std_edl_num_classes_delta}
\begin{tabular}{llc cccc cccc}
\toprule
 &  &  & \multicolumn{4}{c}{MP} & \multicolumn{4}{c}{UM} \\
\cmidrule(lr){4-7}\cmidrule(lr){8-11}
Dataset & K & AUPR$_\text{base}$
& AUROC & $\Delta$ & AUPR & $\Delta$
& AUROC & $\Delta$ & AUPR & $\Delta$ \\
\midrule
MMLU Math & 5 (mismatch) & 0.516
& 0.891 & - & 0.885 & -
& 0.952 & $\downarrow$ & 0.965 & $\downarrow$ \\
MMLU Math & 4 (correct) & 0.516
& 0.891 & 0.000 & 0.885 & 0.000
& 0.898 & -0.054 & 0.890 & -0.075 \\
\midrule
ARC-C & 5 (mismatch) & 0.303
& 0.581 & - & 0.350 & -
& 0.859 & $\downarrow$ & 0.863 & $\downarrow$ \\
ARC-C & 4 (correct) & 0.303
& 0.581 & 0.000 & 0.350 & 0.000
& 0.581 & -0.278 & 0.346 & -0.517 \\
\midrule
ARC-E & 5 (mismatch) & 0.180
& 0.493 & - & 0.179 & -
& 0.840 & $\downarrow$ & 0.813 & $\downarrow$ \\
ARC-E & 4 (correct) & 0.180
& 0.493 & 0.000 & 0.179 & 0.000
& 0.494 & -0.346 & 0.179 & -0.634 \\
\midrule
CSQA & 5 (as-is) & 0.291
& 0.871 & $\downarrow$ & 0.881 & $\downarrow$
& 0.845 & $\downarrow$ & 0.810 & $\downarrow$ \\
CSQA & 4 (removed "E") & 0.337
& 0.602 & -0.269 & 0.402 & -0.479
& 0.593 & -0.252 & 0.385 & -0.425 \\
\bottomrule
\end{tabular}
\end{table}

\section{Entropy Analysis}
\label{sec:ent_analysis}
Tables~\ref{tab:ne_std_ibedl_delta_implA} and \ref{tab:ne_std_ibedl_imp_b} show how the AUROC and AUPR for the normalised Shannon entropy of the probabilities reflects the results seen as part of that main text - that, when class cardinality is matched appropriately (Implementation B), the entropy does not change. This provides supporting evidence of sensitivity to K-variance.

\begin{table}[ht]
\centering
\small
\setlength{\tabcolsep}{3pt}
\caption{Implementation A: OOD detection performance using normalised entropy ($H/\log_2 K$).
Results are shown for mismatch / as-is versus correct / removed "E" configurations.
$\Delta$ denotes the performance change (correct / removed "E" minus mismatch / as-is).}
\label{tab:ne_std_ibedl_delta_implA}
\begin{tabular}{llcccccccc}
\toprule
 &  & \multicolumn{4}{c}{Standard EDL} & \multicolumn{4}{c}{IB-EDL} \\
\cmidrule(lr){3-6}\cmidrule(lr){7-10}
Dataset & K
& AUROC & $\Delta$ & AUPR & $\Delta$
& AUROC & $\Delta$ & AUPR & $\Delta$ \\
\midrule
MMLU Math & 5 (mismatch)
& 0.941 & $\downarrow$ & 0.958 & $\downarrow$
& 0.955 & $\downarrow$ & 0.972 & $\downarrow$ \\
MMLU Math & 4 (correct)
& 0.913 & -0.028 & 0.925 & -0.033
& 0.916 & -0.039 & 0.931 & -0.041 \\
\midrule
ARC-C & 5 (mismatch)
& 0.841 & $\downarrow$ & 0.850 & $\downarrow$
& 0.914 & $\downarrow$ & 0.924 & $\downarrow$ \\
ARC-C & 4 (correct)
& 0.603 & -0.238 & 0.378 & -0.472
& 0.675 & -0.239 & 0.492 & -0.432 \\
\midrule
ARC-E & 5 (mismatch)
& 0.827 & $\downarrow$ & 0.808 & $\downarrow$
& 0.906 & $\downarrow$ & 0.894 & $\downarrow$ \\
ARC-E & 4 (correct)
& 0.547 & -0.280 & 0.213 & -0.595
& 0.595 & -0.311 & 0.244 & -0.650 \\
\midrule
CSQA & 5 (as-is)
& 0.855 & $\downarrow$ & 0.854 & $\downarrow$
& 0.911 & $\downarrow$ & 0.918 & $\downarrow$ \\
CSQA & 4 (removed "E")
& 0.648 & -0.207 & 0.419 & -0.435
& 0.624 & -0.287 & 0.435 & -0.483 \\
\bottomrule
\end{tabular}
\end{table}

\newpage
\begin{table}[!htbp]
\centering
\small
\setlength{\tabcolsep}{3pt}
\caption{Implementation B: OOD detection performance using normalised entropy ($H/\log_2 K$).
Results are shown for mismatch / as-is versus correct / removed "E" configurations.
$\Delta$ denotes the performance change (correct / removed "E" minus mismatch / as-is).}
\label{tab:ne_std_ibedl_imp_b}
\begin{tabular}{llcccccccc}
\toprule
 &  & \multicolumn{4}{c}{Standard EDL} & \multicolumn{4}{c}{IB-EDL} \\
\cmidrule(lr){3-6}\cmidrule(lr){7-10}
Dataset & K
& AUROC & $\Delta$ & AUPR & $\Delta$
& AUROC & $\Delta$ & AUPR & $\Delta$ \\
\midrule
MMLU Math & 5 (mismatch)
& 0.892 & - & 0.885 & -
& 0.874 & - & 0.903 & - \\
MMLU Math & 4 (correct)
& 0.892 & 0.000 & 0.885 & 0.000
& 0.874 & 0.000 & 0.903 & 0.000 \\
\midrule
ARC-C & 5 (mismatch)
& 0.581 & - & 0.350 & -
& 0.630 & - & 0.429 & - \\
ARC-C & 4 (correct)
& 0.581 & 0.000 & 0.350 & 0.000
& 0.630 & 0.000 & 0.429 & 0.000 \\
\midrule
ARC-E & 5 (mismatch)
& 0.493 & - & 0.179 & -
& 0.520 & - & 0.191 & - \\
ARC-E & 4 (correct)
& 0.493 & 0.000 & 0.179 & 0.000
& 0.520 & 0.000 & 0.191 & 0.000 \\
\midrule
CSQA & 5 (as-is)
& 0.830 & $\downarrow$ & 0.825 & $\downarrow$
& 0.863 & $\downarrow$ & 0.866 & $\downarrow$ \\
CSQA & 4 (removed "E")
& 0.602 & -0.228 & 0.402 & -0.423
& 0.678 & -0.185 & 0.514 & -0.352 \\
\bottomrule
\end{tabular}
\end{table}

\clearpage

\section{Additional Backbone: Qwen3.5-9B-Base (Implementation B)}
\label{sec:qwen_impB}
To test whether the class-cardinality effect is specific to Llama3-8B, we repeated the full Implementation B pipeline (identical hyperparameters, seed, prompt formatting, and left-padding; fine-tuned in float32) with the Qwen3.5-9B-Base backbone. Table~\ref{tab:edl_metrics_impB_qwen} reports the fine-tuning metrics, and Tables~\ref{tab:own_ibedl_num_classes_delta_qwen}--\ref{tab:ne_std_ibedl_imp_b_qwen} report the OOD detection results. The same pattern holds: vacuity-based UM (and, for CSQA, MP and normalised entropy) collapses toward chance once the evaluated class cardinality is matched, while MP and normalised entropy are invariant to the numerator $K$ for ARC and MMLU Math. This indicates that the effect is a property of the evaluation procedure as opposed to a property of a particular backbone.

\begin{table}[!ht]
\centering
\caption{Fine-tuning Results for Implementation B (Qwen3.5-9B-Base): fine-tuned using our code with IB-EDL and Standard EDL on the OBQA MCQA dataset.}
\label{tab:edl_metrics_impB_qwen}
\begin{tabular}{lccc}
\hline
\textbf{Method} & \textbf{Accuracy} & \textbf{ECE} & \textbf{NLL} \\
\hline
IB-EDL        & 0.938 & 0.027 & 0.268 \\
Standard EDL & 0.938 & 0.050 & 0.246 \\
\hline
\end{tabular}
\end{table}

\begin{table}[!ht]
\centering
\small
\setlength{\tabcolsep}{2.5pt}
\caption{Implementation B (Qwen3.5-9B-Base): IB-EDL OOD detection results.
$\Delta$ denotes the performance change (correct / removed "E" minus mismatch / as-is).
AUPR$_\text{base}$ is the ID-ratio baseline.}
\label{tab:own_ibedl_num_classes_delta_qwen}
\begin{tabular}{llc cccc cccc}
\toprule
 &  &  & \multicolumn{4}{c}{MP} & \multicolumn{4}{c}{UM} \\
\cmidrule(lr){4-7}\cmidrule(lr){8-11}
Dataset & K & AUPR$_\text{base}$
& AUROC & $\Delta$ & AUPR & $\Delta$
& AUROC & $\Delta$ & AUPR & $\Delta$ \\
\midrule
MMLU Math & 5 (mismatch) & 0.516
& 0.925 & - & 0.933 & -
& 0.961 & $\downarrow$ & 0.929 & $\downarrow$ \\
MMLU Math & 4 (correct) & 0.516
& 0.925 & 0.000 & 0.933 & 0.000
& 0.717 & -0.244 & 0.601 & -0.328 \\
\midrule
ARC-C & 5 (mismatch) & 0.303
& 0.489 & - & 0.292 & -
& 0.987 & $\downarrow$ & 0.977 & $\downarrow$ \\
ARC-C & 4 (correct) & 0.303
& 0.489 & 0.000 & 0.292 & 0.000
& 0.483 & -0.504 & 0.296 & -0.681 \\
\midrule
ARC-E & 5 (mismatch) & 0.180
& 0.394 & - & 0.142 & -
& 0.985 & $\downarrow$ & 0.939 & $\downarrow$ \\
ARC-E & 4 (correct) & 0.180
& 0.394 & 0.000 & 0.142 & 0.000
& 0.406 & -0.579 & 0.150 & -0.789 \\
\midrule
CSQA & 5 (as-is) & 0.291
& 0.960 & $\downarrow$ & 0.968 & $\downarrow$
& 0.939 & $\downarrow$ & 0.745 & $\downarrow$ \\
CSQA & 4 (removed "E") & 0.337
& 0.657 & -0.303 & 0.473 & -0.495
& 0.666 & -0.273 & 0.457 & -0.288 \\
\bottomrule
\end{tabular}
\end{table}

\begin{table}[!ht]
\centering
\small
\setlength{\tabcolsep}{2.5pt}
\caption{Implementation B (Qwen3.5-9B-Base): Standard EDL OOD detection results.
$\Delta$ denotes the performance change (correct / removed "E" minus mismatch / as-is).
AUPR$_\text{base}$ is the ID-ratio baseline.}
\label{tab:own_std_edl_num_classes_delta_qwen}
\begin{tabular}{llc cccc cccc}
\toprule
 &  &  & \multicolumn{4}{c}{MP} & \multicolumn{4}{c}{UM} \\
\cmidrule(lr){4-7}\cmidrule(lr){8-11}
Dataset & K & AUPR$_\text{base}$
& AUROC & $\Delta$ & AUPR & $\Delta$
& AUROC & $\Delta$ & AUPR & $\Delta$ \\
\midrule
MMLU Math & 5 (mismatch) & 0.516
& 0.882 & - & 0.884 & -
& 0.945 & $\downarrow$ & 0.964 & $\downarrow$ \\
MMLU Math & 4 (correct) & 0.516
& 0.882 & 0.000 & 0.884 & 0.000
& 0.881 & -0.064 & 0.882 & -0.082 \\
\midrule
ARC-C & 5 (mismatch) & 0.303
& 0.464 & - & 0.272 & -
& 0.872 & $\downarrow$ & 0.895 & $\downarrow$ \\
ARC-C & 4 (correct) & 0.303
& 0.464 & 0.000 & 0.272 & 0.000
& 0.464 & -0.408 & 0.272 & -0.623 \\
\midrule
ARC-E & 5 (mismatch) & 0.180
& 0.377 & - & 0.136 & -
& 0.863 & $\downarrow$ & 0.867 & $\downarrow$ \\
ARC-E & 4 (correct) & 0.180
& 0.377 & 0.000 & 0.136 & 0.000
& 0.376 & -0.487 & 0.136 & -0.731 \\
\midrule
CSQA & 5 (as-is) & 0.291
& 0.912 & $\downarrow$ & 0.926 & $\downarrow$
& 0.885 & $\downarrow$ & 0.898 & $\downarrow$ \\
CSQA & 4 (removed "E") & 0.337
& 0.656 & -0.256 & 0.504 & -0.422
& 0.657 & -0.228 & 0.508 & -0.390 \\
\bottomrule
\end{tabular}
\end{table}

\clearpage
\begin{table}[!ht]
\centering
\small
\setlength{\tabcolsep}{3pt}
\caption{Implementation B (Qwen3.5-9B-Base): OOD detection performance using normalised entropy ($H/\log_2 K$).
$\Delta$ denotes the performance change (correct / removed "E" minus mismatch / as-is).}
\label{tab:ne_std_ibedl_imp_b_qwen}
\begin{tabular}{llcccccccc}
\toprule
 &  & \multicolumn{4}{c}{Standard EDL} & \multicolumn{4}{c}{IB-EDL} \\
\cmidrule(lr){3-6}\cmidrule(lr){7-10}
Dataset & K
& AUROC & $\Delta$ & AUPR & $\Delta$
& AUROC & $\Delta$ & AUPR & $\Delta$ \\
\midrule
MMLU Math & 5 (mismatch)
& 0.882 & - & 0.884 & -
& 0.925 & - & 0.933 & - \\
MMLU Math & 4 (correct)
& 0.882 & 0.000 & 0.884 & 0.000
& 0.925 & 0.000 & 0.933 & 0.000 \\
\midrule
ARC-C & 5 (mismatch)
& 0.464 & - & 0.272 & -
& 0.489 & - & 0.292 & - \\
ARC-C & 4 (correct)
& 0.464 & 0.000 & 0.272 & 0.000
& 0.489 & 0.000 & 0.292 & 0.000 \\
\midrule
ARC-E & 5 (mismatch)
& 0.377 & - & 0.136 & -
& 0.394 & - & 0.142 & - \\
ARC-E & 4 (correct)
& 0.377 & 0.000 & 0.136 & 0.000
& 0.394 & 0.000 & 0.142 & 0.000 \\
\midrule
CSQA & 5 (as-is)
& 0.892 & $\downarrow$ & 0.905 & $\downarrow$
& 0.956 & $\downarrow$ & 0.962 & $\downarrow$ \\
CSQA & 4 (removed "E")
& 0.656 & -0.236 & 0.504 & -0.401
& 0.657 & -0.299 & 0.473 & -0.489 \\
\bottomrule
\end{tabular}
\end{table}

\end{document}